\newtheorem{theorem}{Theorem}[section]
\newtheorem{corollary}{Corollary}[theorem]
\newtheorem{lemma}[theorem]{Lemma}
\def\BibTeX{{\rm B\kern-.05em{\sc i\kern-.025em b}\kern-.08em
    T\kern-.1667em\lower.7ex\hbox{E}\kern-.125emX}}
\begin{document}
\title{Online 3D Bin Packing with Fast Stability Validation and Stable Rearrangement Planning}
\author{
        Ziyan Gao${}^1$, Lijun Wang${}^2$, Yuntao Kong${}^2$, Nak Young Chong${}^*$
\thanks{This work was supported in part by , and in part by .}
\thanks{Manuscript received ; revised .}}

\maketitle

\begin{abstract}
The Online Bin Packing Problem (OBPP) is a sequential decision-making task in which each item must be placed immediately upon arrival, with no knowledge of future arrivals. Although recent deep-reinforcement-learning methods achieve superior volume utilization compared with classical heuristics, the learned policies cannot ensure the structural stability of the bin and lack mechanisms for safely reconfiguring the bin when a new item cannot be placed directly.
In this work, we propose a novel framework that integrates packing policy with structural stability validation and heuristic planning to overcome these limitations. Specifically, we introduce the concept of Load Bearable Convex Polygon (LBCP), which provides a computationally efficient way to identify stable loading positions that guarantee no bin collapse. Additionally, we present Stable Rearrangement Planning (SRP), a module that rearranges existing items to accommodate new ones while maintaining overall stability.
Extensive experiments on standard OBPP benchmarks demonstrate the efficiency and generalizability of our LBCP-based stability validation, as well as the superiority of SRP in finding the effort-saving rearrangement plans. Our method offers a robust and practical solution for automated packing in real-world industrial and logistics applications. A demonstration video of the real-world deployment is available online.\footnote{\url{https://drive.google.com/file/d/1Uhnk7P-2geSftrlKKPW9NTe0NMLXweK1/view?usp=drive_link}}
\end{abstract}

\begin{IEEEkeywords}
Online Bin Packing, AI and Machine Learning in Manufacturing and Logistics Systems, Task and Motion Planning, Reinforcement Learning
\end{IEEEkeywords}

\section{Introduction}
\IEEEPARstart{T}{he} 3D Online Bin Packing Problem (OBPP) plays a pivotal role in a variety of industrial and logistical applications, from automated warehouses to distribution centers. The online setting requires real-time decisions about each incoming item’s placement with limited or no knowledge of future arrivals. This complexity makes traditional heuristic-based approaches insufficient for optimal packing~\cite{HEU_christensen2016multidimensional}. Consequently, deep reinforcement learning (DRL) approaches have gained traction as a promising alternative for optimizing bin utilization.

Recent research~\cite{
DRL_Dataset_zhao2021online, 
DRL_PCT,
DRL_2021_yang2021packerbot,
Stability_zhao2022learning,
DRL_GOPT, 
DRL_wu2024efficient,
DRL_kang2024gradual,
DRL_mu20253d,
DRL_Precedence_hu2020tap,
DRL_MP_Precedence_xu2023neural, 
DRL_unpack_song2023towards,  
DRL_MP_yang2023heuristics,
Offline_DRL_zhang2021attend2pack,
Offline_DRL_zhu2021learning,
Offline_DRL_jiang2021learning,
Offline_DRL_que2023solving,
Offline_DRL_pan2024ppn,
Offline_DRL_packing_order} has demonstrated that DRL models outperform the heuristic methods in maximizing bin utilization. While maximizing bin utilization is crucial, another critical aspect of OBPP is ensuring bin stability during and after each packing operation. In industrial applications, unstable packing configurations can lead to safety risks, inefficient space usage, and logistical challenges. Despite the importance of stability, a robust yet computationally efficient solution remains an open question in 3D OBPP. Recent works have started exploring stability constraints beyond simple geometric fitting. Some approaches leverage physics-based modeling, where physical constraints such as friction, weight distribution are explicitly formulated to assess stability\cite{Stability_Wang_8794049, Stability_10556686}. While these methods show promise, they often suffer from high computational costs, limiting their practicality for large-scale real-time applications. Additionally, many of these approaches are tailored to specific item types or packing rules, reducing their generalizability.

We propose a computationally efficient, stability-ensured bin packing framework. Our main innovation is to validate and maintain stable load configurations through a unique concept called Load-Bearable Convex Polygon (LBCP). LBCPs provide a lightweight yet powerful representation that captures the set of load-bearing regions in a packed bin without explicit knowledge of each item’s mass or precise center of gravity (CoG). Importantly, the time complexity 
of the proposed stability validation process is nearly constant and is readily integrated into the DRL training framework, ensuring only stable placements are proposed for each incoming item. 

Beyond guaranteeing stability, we handle the common situation in which the bin must be reconfigured to fit an incoming item that cannot be placed directly. Unlike the previous work~\cite{DRL_unpack_song2023towards, DRL_MP_yang2023heuristics}, we introduce the Stable Rearrangement Planning (SRP) module to devise a series of efficient and stability-ensured operations to either make space for a new item or enhance overall bin utilization. Specifically, the proposed SRP is a layered approach combining Monte Carlo Tree Search~\cite{MCTS} that finds the feasible operational sequence given the limited search steps, with a post-processing $A^*$ search~\cite{A_star} that shortens the overall sequence under the heuristic guidance.

Extensive experiments on the widely used RS Dataset~\cite{DRL_Dataset_zhao2021online} indicate that our approach not only matches the packing efficiency of advanced DRL methods, but also ensures the stability of each newly placed item. Compared with a baseline method, our LBCP-based method significantly reduces the computational overhead, enabling real-time deployment. Additionally, we demonstrate that the SRP technique significantly enhances bin utilization by effectively reorganizing a small number of items to make room for new entries. In simulated trials, our SRP framework outperforms simple unpack-and-repack heuristics~\cite{DRL_MP_yang2023heuristics} in both final utilization and the number of operations. Finally, we showcase a robotic bin packing system for cuboid objects.

In summary, the main contributions of this paper include:
\begin{itemize}
    \item A novel stability validation mechanism for OBPP based on LBCPs, which efficiently checks stability without requiring prior knowledge of accurate item mass distributions.
    \item An integrated DRL pipeline that leverages LBCPs as an action-masking mechanism, enabling stable and high utilization bin packing decisions in real time.
    \item The SRP framework combining tree search and sequence refinement to free up space for new items or reorganize the bin occupancy in an efficient manner.
    \item A thorough empirical evaluation on standard bin packing benchmarks, showing both significantly improved utilization and guaranteed physical stability, along with promising real-world feasibility.
\end{itemize}

\section{Related Work}
\subsection{DRL Model for Bin Packing}

In recent years, DRL has become a powerful tool for solving both online~\cite{DRL_Dataset_zhao2021online, 
DRL_PCT,
DRL_2021_yang2021packerbot,
Stability_zhao2022learning,
DRL_GOPT, 
DRL_wu2024efficient,
DRL_kang2024gradual,
DRL_mu20253d,
DRL_Precedence_hu2020tap,
DRL_MP_Precedence_xu2023neural, 
DRL_unpack_song2023towards,  
DRL_MP_yang2023heuristics} and offline~\cite{Offline_DRL_zhang2021attend2pack,
Offline_DRL_zhu2021learning,
Offline_DRL_jiang2021learning,
Offline_DRL_que2023solving,
Offline_DRL_pan2024ppn,
Offline_DRL_packing_order} BPPs. Offline DRL emphasizes efficiency for large-scale problems, while online DRL focuses on maximizing space usage under feasibility constraints, with varying representations of bin and item states.

For offline BPP, many works aim to accelerate search via hierarchical structures and pruning. Zhu~{\it et al.}~\cite{Offline_DRL_zhu2021learning} use Convolutional Neural Networks (CNNs)~\cite{CNN} to skip poor candidates, while Zhang~{\it et al.}~\cite{Offline_DRL_zhang2021attend2pack} and Jiang~{\it et al.}~\cite{Offline_DRL_jiang2021learning} apply two-stage decompositions. Pan~{\it et al.}~\cite{Offline_DRL_pan2024ppn} and Wang~{\it et al.}~\cite{Offline_DRL_packing_order} introduce learned proposal modules to generate compact placements. Que~{\it et al.}~\cite{Offline_DRL_que2023solving} adopt transformers to guide search toward promising actions. These methods aim to reduce the cost of exhaustive exploration while preserving strong packing quality.

In online BPP, early work~\cite{DRL_Dataset_zhao2021online, Stability_zhao2022learning} discretizes the bin into grids for efficient collision checks. Yang~{\it et al.}~\cite{DRL_MP_yang2023heuristics} use 3D voxels to encode bin states, though limited resolution hinders generalization. To address this, Zhao~{\it et al.}~\cite{DRL_PCT} propose configuration-node trees for continuous placement, and Xiong~{\it et al.}~\cite{DRL_GOPT} constrain actions using Empty Maximal Spaces (EMS)~\cite{HEU_EMS}. CNNs are widely used as backbones~\cite{DRL_Dataset_zhao2021online, Stability_zhao2022learning, DRL_MP_yang2023heuristics, DRL_2021_yang2021packerbot, DRL_unpack_song2023towards, DRL_wu2024efficient}, while transformers~\cite{Attention} enhance item-subspace encoding~\cite{DRL_GOPT, DRL_kang2024gradual, DRL_mu20253d}.

Despite efficiency improvements, stack stability remains underexplored. Rather than proposing a new DRL model, this study introduces a general framework with a fast stability validation module to support policy learning in OBPP.

\subsection{Stability Validation}
Traditional stability validation methods fall into two categories: geometry-based and static equilibrium-based. Geometry-based methods, such as those by Ramos {\it et al.}~\cite{Stability_ramos2016physical}, assess stability via the item's CoG and support polygon, while Gzara {\it et al.}~\cite{Stability_gzara2020pallet} and Zhu {\it et al.}~\cite{Stability_ZHU2024109814} impose contact area constraints. Ali {\it et al.}~\cite{Stability_ali2025static} compared several geometric criteria and their impact on packing quality and real-world stability. These approaches can be overly conservative or too relaxed, failing to balance bin utilization and stability. On the static equilibrium side, Hauser~\cite{Stability_Wang_8794049} formulated the problem as convex optimization under known geometry and friction, while Liu {\it et al.}~\cite{Stability_10556686} introduced force-balance equations to analyze weak points in stacked structures. Despite their accuracy, such methods are computationally expensive and lack scalability for OBPP.

Several works have integrated stability modules into DRL frameworks~\cite{DRL_Dataset_zhao2021online, Stability_zhao2022learning, Stability_peiwen, DRL_MP_yang2023heuristics, Stability_10631682, Stability_zhang2025physics}. Zhao {\it et al.}~\cite{DRL_Dataset_zhao2021online} used geometric rules~\cite{Stability_ramos2016physical} to filter unstable placements and later proposed a stacking tree to recursively validate item stability~\cite{Stability_zhao2022learning}, assuming known and uniform mass—an unrealistic constraint in practice. Zhou {\it et al.}~\cite{Stability_peiwen} introduced an ``empty map'' aligned with the heightmap to infer support regions, showing improved results over the baseline but at the cost of conservative behavior. Yang {\it et al.}~\cite{DRL_MP_yang2023heuristics} enhanced stability checks by combining geometry-based rules with neural prediction. Wu {\it et al.}~\cite{Stability_10631682} and Zhang {\it et al.}~\cite{Stability_zhang2025physics} applied iterative action masking to improve RL-based palletization but faced generalization challenges under distribution shifts. Beyond learning-based strategies, Mazur {\it et al.}~\cite{Stability_MAZUR2025100329} employed real-time physics simulation for cargo loading, revealing discrepancies between static models and dynamic behavior.

\subsection{Planning for Item Rearrangement}
Some recent work explored the unpacking mechanism~\cite{DRL_unpack_song2023towards, DRL_MP_yang2023heuristics}. Song {\it et al.}~\cite{DRL_unpack_song2023towards} proposed a packing-and-unpacking network (PUN) that decides either to pack the item or to remove a previously placed item from the bin to free up space, and a DRL model was employed to learn how to switch between packing and unpacking for better space utilization. While their experimental findings demonstrated an enhancement in bin utilization, their model might face a generalization challenge. Additionally, the operational sequence might prolong due to the observe-then-execute approach. Yang {\it et al.}~\cite{DRL_MP_yang2023heuristics} proposed a straightforward unpacking strategy that takes into account both item size and wasted space. Nevertheless, this efficiency-oriented heuristic disregards stability constraints, frequently leading to local optima. Hu {\it et al.}~\cite{DRL_Precedence_hu2020tap} and Xu {\it et al.}~\cite{DRL_MP_Precedence_xu2023neural} introduced a different scenario, specifically targeting the transport-and-pack (TAP) problem. In particular, they applied a precedence graph to represent the constraints of the packing order. In our study, we utilized the precedence graph to streamline the operational sequence in order to minimize operational time.
\begin{figure*}
    \centering
    \includegraphics[width=0.8\textwidth]{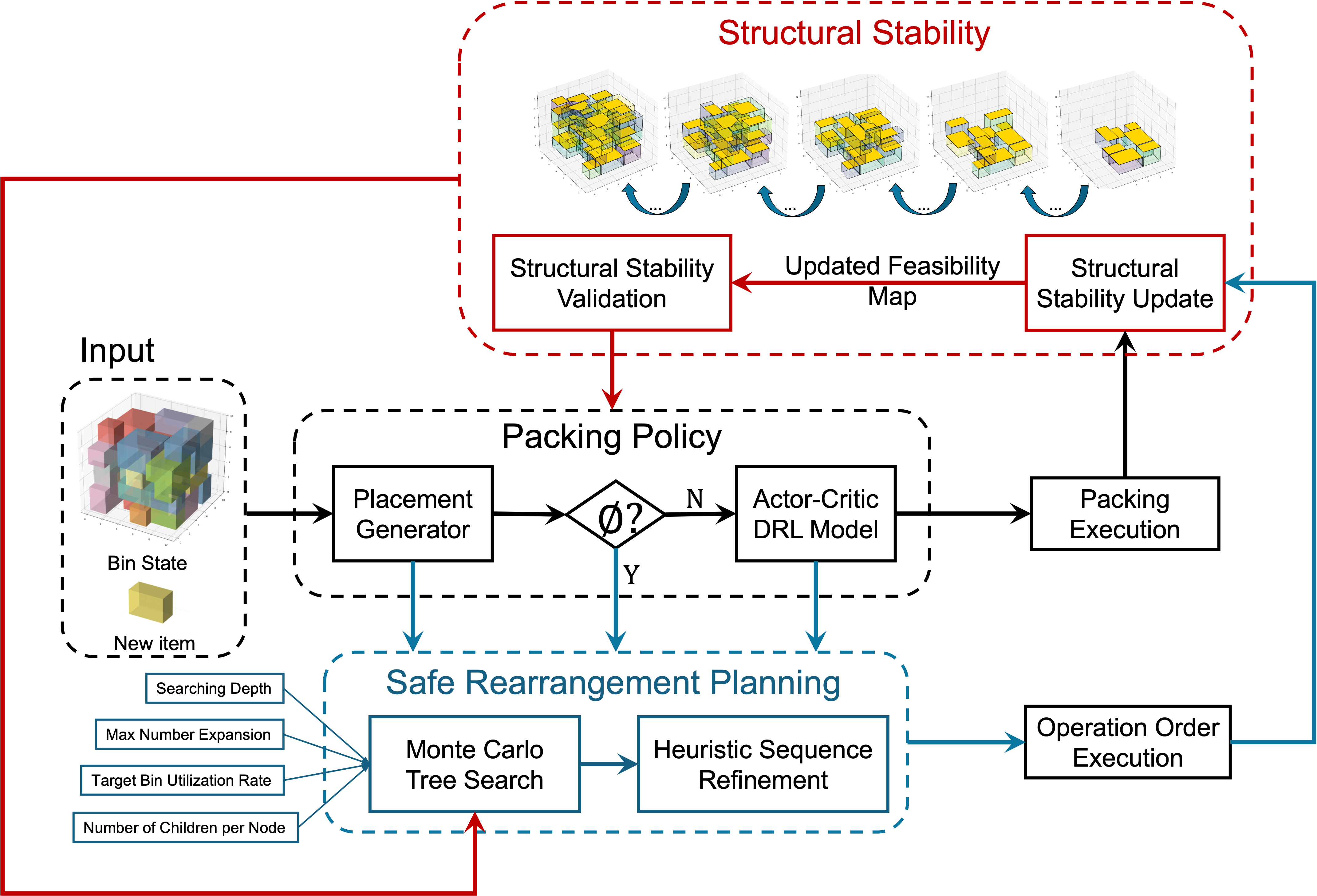}
    \caption{The proposed online bin packing framework consists of three key modules: the structural stability module (Sections~\ref{section:lbcp} and~\ref{section:ssv}), the packing policy module (Section~\ref{section: stability ensured drl}), and the safe rearrangement planning module (Section~\ref{section:srp}). Stability is ensured by the structural stability validation process, which filters out unstable placements based on an updated feasibility map. This validation is seamlessly integrated into both the packing policy and rearrangement planning modules to enforce stability constraints throughout. The packing policy module selects a loading position for each new item. If no stable placement is found, the safe rearrangement planning module generates a sequence of rearrangement actions on already packed items to create feasible space for the incoming item.}
    \label{fig: framework}
\end{figure*}
\section{Method}
\subsection{Problem Statement and Framework Overview}
The bin and items are cuboidal. We adopt an online packing scheme where items arrive sequentially, and only the latest item is accessible. Based on the current bin state and newly observed item, the robot is requested to determine the loading position of this item. 
Inspired by prior works~\cite{DRL_unpack_song2023towards, DRL_MP_yang2023heuristics}, showing that unpacking can improve bin utilization, this work allows the robot to temporarily move previously packed items to a staging buffer outside the bin to make space for the new item, and later place them back. We will detail this procedure in safe rearrangement planning section ~\ref{section:srp}. The process terminates when no stable placement can be found for the incoming item.

As for the object physical property, we assume the weight of each item is unknown, but the center of gravity of each item is subject to a bounded uncertainty, {\it i.e.}, the shift of CoG, denoted by $\mathcal{C}_i$, along the axes of its body frame is proportional to the respective dimensions. We use $\mathcal{O}_i=[w_i,d_i, h_i]^\top$ to denote the dimension of the $i$th item. If the highest ratio is given as $\delta_{CoG}$, then, the uncertainty can be quantified by Eq.~\ref{eq:CoG set}.

\begin{equation}
\mathcal{C}_i = \{\mathbf{g}_i + 
\begin{bmatrix}
\delta_w{w_i} \\
\delta_d{d_i} \\
\delta_h{h_i}
\end{bmatrix}, |\delta_w|,|\delta_d|,|\delta_h|\leq \delta_{CoG}\}
\label{eq:CoG set}
\end{equation}

We represent the loading position of $i$th item as $\mathcal{L}_i = (x_i, y_i, z_i)$. Let $\mathcal{I}_i = (\mathcal{O}_i, \mathcal{L}_i)$ denote the state of the $i$th item inside the bin. The objective is to identify stable loading positions and then select the one that facilitates the future accommodation for upcoming items based on $\{\mathcal{I}_1,\mathcal{I}_2,...,\mathcal{I}_{i-1}, \mathcal{O}_{i}\}$. This problem involves sequential decision making under uncertainty, balancing optimal space utilization with the rigorous constraints imposed by load stability.

\subsection{Overview of the Proposed Framework}

The proposed framework for \textit{online bin packing} consists of three integral components, as illustrated in Fig.~\ref{fig: framework}:

\begin{enumerate}
    \item A \textbf{packing policy module} responsible for proposing feasible placement options for incoming items
    \item A \textbf{structural stability module} that ensures the mechanical integrity of the packed bin
    \item A \textbf{safe rearrangement planning module} that generates feasible rearrangement strategies when direct placement is not possible.
\end{enumerate}

At each decision step, the system takes as input the current state of the bin and a newly arriving item. A \textit{placement generator}, based on heuristics, first proposes a set of candidate loading positions by analyzing the current bin configuration. These candidate positions are then passed to the \textit{structural stability module}, which evaluates whether placing the item at each position would compromise the stability of the bin. Any placements that could destabilize the bin structure are discarded.

If one or more stable placements are available, the system uses an \textit{Actor-Critic DRL model} to select the most suitable option from the candidate set. If no stable placement exists, the system activates the \textit{safe rearrangement planning module}. This module leverages a combination of \textit{Monte Carlo Tree Search (MCTS)} and \textit{heuristic sequence refinement} to generate a rearrangement plan. The goal is to create enough space for the new item while ensuring that the stability of the bin configuration is preserved throughout the entire process.

The remainder of this section is organized as follows.
First, we describe the \emph{structural stability module}, which verifies that every placement satisfies static-equilibrium constraints. Next, we show how this module is integrated with the learning-based packing policy to filter unstable actions online. Finally, we present the safe rearrangement planning module.

\subsection{Structural Stability}
\subsubsection{Static Equilibrium}
An item is statically stable when its center of gravity (CoG) lies within its \emph{support polygon}, defined as the minimal convex hull of all contact points \cite{Stability_ramos2016physical}. Because we assume the support polygon fully covers the CoG uncertainty, this condition applies directly. Fig.~\ref{fig:support polygon} illustrates a stable placement of item~$a$: the convex hull of its contact regions (darker-shaded overlapping area) completely encloses the CoG uncertainty (yellow).

\begin{figure}
    \centering
    \includegraphics[width=0.3\textwidth]{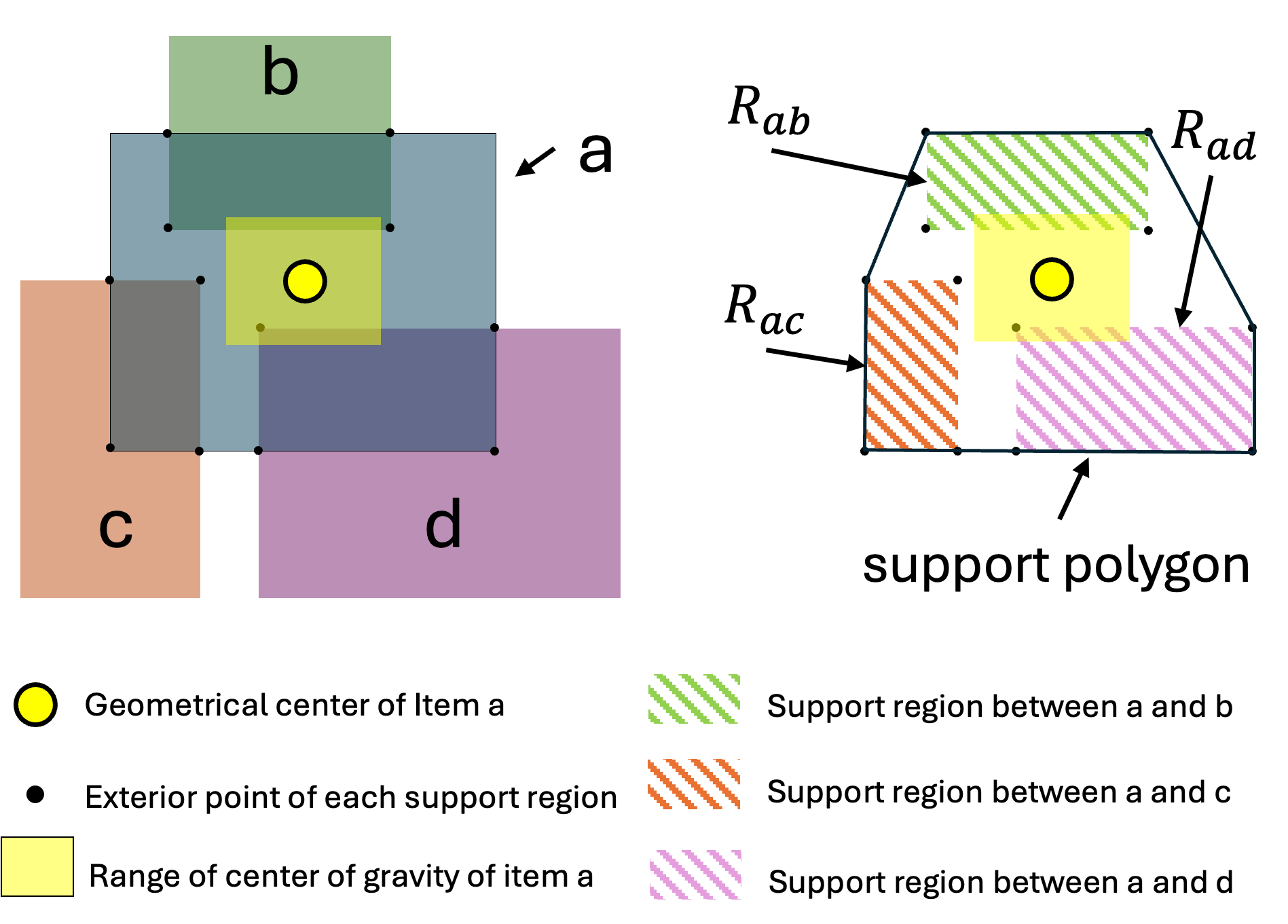}
    \caption{Support regions and support polygon. Item $a$ is considered stable if its CoG is inside the support polygon.}
    \label{fig:support polygon}
\end{figure}

\begin{figure}
    \centering
    \includegraphics[width=0.35\textwidth]{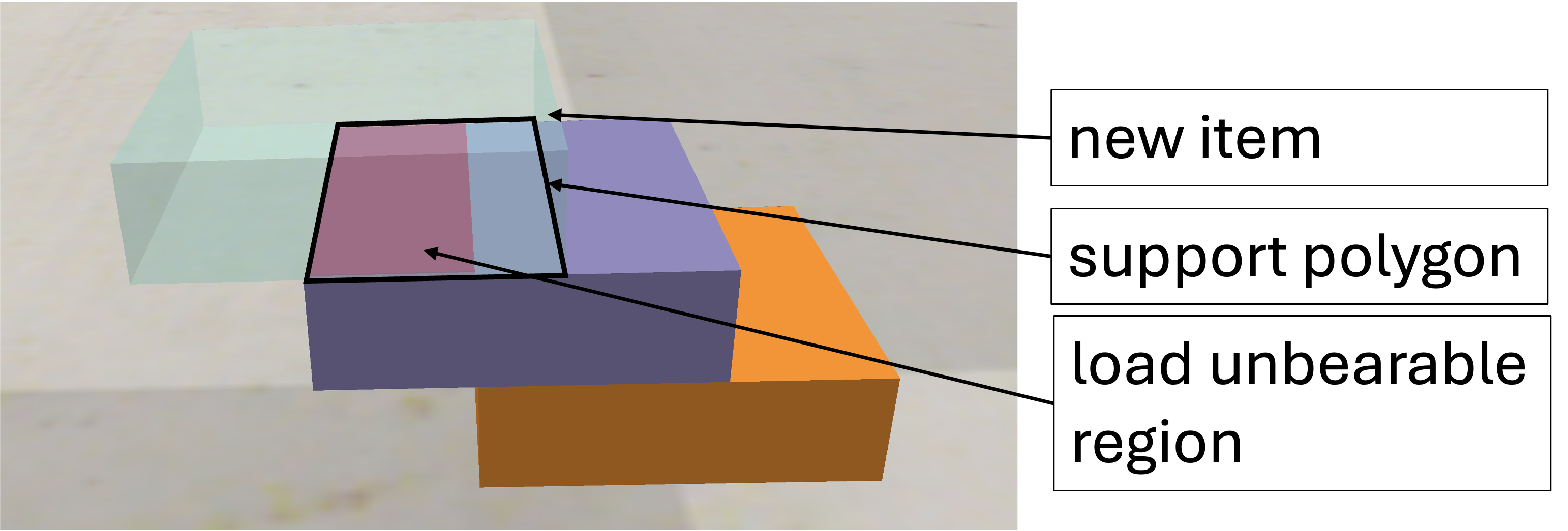}
    \caption{Load unbearable region in the support polygon.}
    \label{fig:unbearable}
\end{figure}

Specifically, the support polygon of a new item $\mathcal{I}_{new}$ can be calculated in two steps. The first step is to find all items (including ground) that support $\mathcal{I}_{new}$ and identify all support regions. The second step is to calculate the convex hull of support regions. Let $\mathcal{B}_t = \{\mathcal{I}_1,\mathcal{I}_2,...,\mathcal{I}_i,...,\mathcal{I}_m\}$ denote the current bin state. We define $\mathbf{CH}(\{...\})$ as the function that takes a set of support regions to compute their extreme points. Then, all support regions of $\mathcal{I}_{new}$ can be represented by $\{\mathcal{I}_{new}\cap\mathcal{I}_j|z_j+h_j=z_{new}, \mathcal{I}_j\in\mathcal{B}_t\}$, and 
the support polygon can be specified by $\mathbf{CH}(\{\mathcal{I}_{new}\cap\mathcal{I}_j|z_j+h_j=z_{new}, \mathcal{I}_j\in\mathcal{B}_t\})$. We use $\mathbf{P}_{new}^\triangle$, where the superscript $\triangle$ infers the convexity of the polygon, to represent the support polygon associated with $\mathcal{I}_{new}$. 


The implicit assumption of this stability is that the points on hull vertices can produce any support forces for the item on top of it, which can be readily compromised in scenarios involving stacks of more than two layers. Fig.~\ref{fig:unbearable} illustrates the case that the support polygon may fail to bear the gravitational forces by the new item, since there is no structural support beneath the item in the middle. Thus, the geometric intersection alone cannot guarantee true load‐bearing capacity, and the item can topple despite appearing stable in a purely geometric analysis.

\subsubsection{Load Bearable Convex Polygon}
\label{section:lbcp}
Load Bearable Convex Polygon (LBCP) is a convex polygon parallel to the horizontal plane and is characterized by its ability to support any gravitational forces at any point within it. Each LBCP is usually located at the top face of the packed item, and we use $(\mathbf{P}_i^\triangle, h^s_{i})$ to denote a specific LBCP where $h^s_i = z_i + h_i$, while a special LBCP is located at the bottom of the bin and is represented by $(\mathbf{P}_0^\triangle, 0)$.

The number of LBCPs increases with the growing number of packed items, and we use $\mathcal{P}_t$ to represent the set of LBCPs in the bin state $\mathcal{B}_t$. Excluding $(\mathbf{P}_0^\triangle, 0)$, the number of LBCPs is equal to the number of packed items.

\begin{lemma}
If a cuboidal item is placed stably at the bottom of the bin, then its entire top face is an LBCP.\label{lemma:1}
\end{lemma}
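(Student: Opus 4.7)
\medskip

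\noindent\textbf{Proof proposal.} The plan is to show that any point load placed on the top face, together with the item's own weight under the CoG uncertainty model of Eq.~\ref{eq:CoG set}, admits a statically admissible reaction distribution from the ground. Concretely, I would proceed as follows.

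First, I would unpack the hypothesis. Because the item sits at the bottom of the bin, its support region coincides with the intersection of its footprint with $\mathbf{P}_0^\triangle$, and since $\mathbf{P}_0^\triangle$ covers the whole bin floor, this support region equals the entire bottom face of the cuboid. Taking the convex hull in the definition of $\mathbf{P}_{new}^\triangle$ thus yields the full bottom rectangle. The key geometric fact I would then invoke is that a cuboid's top face and bottom face are congruent rectangles lying in parallel horizontal planes with the same horizontal extent; consequently the vertical projection of any point of the top face falls inside the support polygon.

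Next, I would argue the load-bearing property. Let $p$ be an arbitrary point of the top face and suppose a purely vertical downward force $F$ is applied at $p$. The item's own weight acts at some point of the CoG uncertainty set $\mathcal{C}_i$; by Eq.~\ref{eq:CoG set} this CoG still projects into the footprint (the horizontal shifts are bounded by $\tfrac12 w_i$ and $\tfrac12 d_i$). The resultant of $F$ and the weight is therefore a downward force whose line of action pierces the bottom face. Since the ground is a continuous rigid support capable of exerting any non-negative normal pressure distribution over that face, I can explicitly produce an equilibrating reaction: for instance, a two-point contact distribution at the two corners straddling the projected resultant, with weights chosen by the lever rule, balances all forces and moments. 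Hence the item remains in static equilibrium, matching exactly the condition in the LBCP definition.

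Finally, I would conclude that since $p$ was arbitrary on the top face, every point of that face is load-bearing, and since the top face is itself a convex rectangle parallel to the horizontal plane at height $h^s_i = z_i + h_i$, it qualifies as an LBCP according to the definition in Section~\ref{section:lbcp}. The only delicate step I anticipate is making the CoG-uncertainty part airtight: one must verify that \emph{every} admissible CoG in $\mathcal{C}_i$ (not just the nominal $\mathbf{g}_i$) still projects strictly inside the bottom face so that the equilibrating reaction remains non-negative. This follows from the bound $\delta_{CoG}\le \tfrac12$ implicit in placing the CoG inside the item, but it is worth stating explicitly, as the same bound will be needed to generalize the lemma to stacked items later in the paper.
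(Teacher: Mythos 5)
Your proposal is correct and follows essentially the same route as the paper's own proof: identify the support polygon as the full bottom face, observe that any top-face load (combined with the self-weight under the CoG uncertainty) produces a resultant whose line of action pierces that face, and invoke the ground's ability to supply an equilibrating non-negative reaction. You are somewhat more careful than the paper --- in particular your explicit lever-rule construction of the reaction and your remark that $\delta_{CoG}\le \tfrac12$ is needed for every admissible CoG to project inside the footprint --- but the underlying argument is the same.
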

\begin{proof}
An item is considered statically stable if its CoG lies within the support polygon. For an item resting at the bottom of the bin, the support polygon is simply the item’s base in full contact with the bottom plane. 
Any load placed on the item’s top face is transmitted vertically downward through the item to the bottom. The bottom, in turn, provides equal and opposite reaction forces that prevent tipping or toppling. In other words, placing a load on any point of the top face will not shift the item’s CoG outside its support polygon. Thus, the top face can fully bear that load without overturning. Hence, the item’s top face is a valid LBCP.
\end{proof}

Our stability validation is based on the inclusiveness relationship between the CoG of the new item and its support polygon. The substantial difference from \cite{Stability_ramos2016physical} is that the support polygon is calculated using LBCPs, given by Eq.~\ref{eq:LBCP}. 
\begin{equation}
\mathbf{P}_{new}^\triangle = \mathbf{CH}(\{\mathcal{I}_{new}\cap\mathbf{P}^\triangle_i|z_{new}=h^s_i, \mathbf{P}^\triangle_i\in \mathcal{P}_t\})
    \label{eq:LBCP}
\end{equation}  
where $\mathcal{P}_t = \{\mathbf{P}^\triangle_0, \mathbf{P}^\triangle_1,\mathbf{P}^\triangle_2,...,\mathbf{P}^\triangle_i,...,\mathbf{P}^\triangle_m\}$.\\

\begin{theorem}
The support polygon calculated based on the current LBCPs is also an LBCP.\label{theory:1}        
\end{theorem}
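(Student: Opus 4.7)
My plan is to establish the two defining properties of an LBCP for $\mathbf{P}_{new}^\triangle$, namely convexity and the ability to bear any vertical load at any of its points. Convexity is immediate, since by Eq.~\eqref{eq:LBCP} $\mathbf{P}_{new}^\triangle$ is constructed as a convex hull, so the only substantive work is to reduce the load-bearing property of $\mathbf{P}_{new}^\triangle$ to that of the underlying LBCPs $\mathbf{P}^\triangle_i$ that contribute support regions at height $z_{new}$.

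First, I would fix an arbitrary point $p \in \mathbf{P}_{new}^\triangle$ together with a downward load $F$ applied at $p$. Because $p$ lies in the convex hull of $\bigcup_i (\mathcal{I}_{new}\cap \mathbf{P}^\triangle_i)$, Carath\'eodory's theorem in the plane yields points $p_1,\dots,p_K$ with $K\leq 3$, each belonging to some intersection $\mathcal{I}_{new}\cap \mathbf{P}^\triangle_{i_k}$, together with nonnegative weights $\lambda_k$ summing to $1$ such that $p = \sum_k \lambda_k p_k$.

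Second, I would read this geometric decomposition as a physical load transmission. The rigid cuboidal new item carrying $F$ at $p$ is held in static equilibrium by vertical reaction forces of magnitude $\lambda_k F$ at the contact points $p_k$, because these reactions balance the total downward force ($\sum_k \lambda_k F = F$) and the moment ($\sum_k \lambda_k p_k = p$). Each reaction $\lambda_k F$ is applied at a point lying inside the existing LBCP $\mathbf{P}^\triangle_{i_k}$, which, by the defining property of an LBCP, can bear any load at any of its points without destabilizing the structure beneath. Since $p$ and $F$ were arbitrary, $\mathbf{P}_{new}^\triangle$ is itself load-bearable and hence an LBCP.

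The main obstacle is making the bridge from geometry to statics rigorous. Carath\'eodory yields the convex decomposition, but I must argue that the convex weights $\lambda_k$ coincide with an actual solution of the rigid body's equilibrium equations under purely vertical loading, so that the reactions transmitted to $p_1,\dots,p_K$ really have magnitudes $\lambda_k F$. For vertical loads there are exactly three scalar equilibrium equations (one force balance, two moment balances), which matches the at-most-three Carath\'eodory points, so the correspondence is well-posed; phrasing this carefully and noting that the property propagates inductively over the stack ensures that the LBCP structure cascades upward rather than being established only at a single layer.
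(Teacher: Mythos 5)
Your proof is correct, and it is substantially more complete than the one the paper gives. The paper's proof of Theorem~\ref{theory:1} consists of two assertions: points inside the intersected regions $\mathcal{I}_{new}\cap\mathbf{P}^\triangle_i$ can bear any load (by the LBCP definition), and $\mathbf{CH}(\{\dots\})$ guarantees convexity. It never addresses the only nontrivial case --- a load applied at a point of the convex hull that lies \emph{outside} every intersected region, which is precisely the situation of Fig.~\ref{fig:unbearable} that motivates the whole construction. Your Carath\'eodory decomposition $p=\sum_k\lambda_k p_k$ with $K\le 3$, read as a nonnegative vertical reaction distribution $R_k=\lambda_k F$ satisfying the one force-balance and two moment-balance equations of the rigid new item, is exactly the missing bridge from geometry to statics; the paper only deploys this ``convex combination of forces'' reasoning later, in the proof of Corollary~\ref{corollary:2}. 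One residual point worth making explicit in your write-up: when several of the $p_k$ (or loads from several stacked items) land in the same underlying LBCP $\mathbf{P}^\triangle_{i_k}$, the LBCP definition as stated only covers a single point load, so you need the observation that the resultant of nonnegative vertical loads at points of a convex region acts at a convex combination of those points, hence again inside that LBCP --- this closes the inductive cascade you allude to at the end.
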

\begin{proof}
The support polygon is calculated based on all intersected regions between $\mathcal{I}_{new}$ and LBCPs. From the definition of LBCPs, any point inside the intersected regions can bear any magnitude of gravitational forces. The operation $\mathbf{CH}(\{\dots\})$ guarantees convexity.
\end{proof}

\begin{corollary}
If the CoG of a newly packed item $\mathcal{I}_{new}$ falls in its associated support polygon that is calculated based on LBCPs, the stability of $\mathcal{I}_{new}$ can be ensured.\label{corollary:1}
\end{corollary}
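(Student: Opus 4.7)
The plan is to treat this corollary as a direct synthesis of Theorem~\ref{theory:1} with the geometric stability criterion already recalled at the start of Section~III-C. The argument splits naturally into three short observations, and most of the real content has already been absorbed into the earlier results, so the proof should be only a few lines.

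First, I would invoke Theorem~\ref{theory:1} to upgrade the hypothesis: the support polygon $\mathbf{P}^\triangle_{new}$ computed via Eq.~\ref{eq:LBCP} is itself an LBCP, which means that every point inside it can sustain an arbitrary downward load without destabilizing anything beneath. Second, I would recall the classical static-equilibrium criterion used throughout the paper: a cuboidal item is tip-stable whenever the vertical projection of its CoG lies inside its support polygon, because in that case there exists a non-negative distribution of vertical reaction forces on the polygon that balances gravity and yields zero net tipping moment. The hypothesis of the corollary is exactly that the CoG of $\mathcal{I}_{new}$ lies in $\mathbf{P}^\triangle_{new}$, so such a force distribution exists geometrically.

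The third—and the only non-trivial—step is to argue that this force distribution is actually physically realizable, not merely geometrically admissible. This is precisely what the LBCP property buys us: by step one, $\mathbf{P}^\triangle_{new}$ is load-bearable, so the packed structure underneath really can supply those reaction forces without collapsing. This is what rules out the pathology in Fig.~\ref{fig:unbearable}, where a purely geometric support polygon fails because no material lies underneath the interior of the hull.

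The main obstacle I expect is not mathematical but a matter of hypothesis bookkeeping, namely how to reconcile the single-point statement ``CoG lies in the support polygon'' with the bounded CoG-uncertainty set $\mathcal{C}_{new}$ introduced in Eq.~\ref{eq:CoG set}. The cleanest fix will be to read the hypothesis as requiring the whole set $\mathcal{C}_{new}$ to be contained in $\mathbf{P}^\triangle_{new}$; since $\mathbf{P}^\triangle_{new}$ is convex and the LBCP property holds pointwise inside it, the stability conclusion then extends uniformly over the uncertainty set, matching the assumption stated just before Eq.~\ref{eq:LBCP}.
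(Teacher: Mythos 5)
Your proposal is correct and follows essentially the same route as the paper's own proof: invoke Theorem~\ref{theory:1} to conclude that $\mathbf{P}^\triangle_{new}$ is an LBCP, then combine the LBCP definition (any interior point can supply an arbitrary reaction force) with the CoG-inside-polygon criterion to conclude that gravity is balanced. Your extra remarks on physical realizability versus geometric admissibility and on extending the conclusion over the uncertainty set $\mathcal{C}_{new}$ are just a more explicit articulation of what the paper's terse proof (and Algorithm~\ref{alg:validate}) already assume, not a different argument.
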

\begin{proof}
According to Theorem~\ref{theory:1}, the support polygon is considered an LBCP. From the LBCP definition, any point inside the support polygon can counteract any gravitational force. Thus, if the CoG of the newly added item $\mathcal{I}_{new}$ is located within the support polygon, the gravitational force will be balanced.
\end{proof}

\begin{corollary}
If $\mathcal{I}_{new}$ satisfies Corollary~\ref{corollary:1}, the stability of items that support $\mathcal{I}_{new}$ will remain unaffected in the presence of $\mathcal{I}_{new}$.\label{corollary:2}
\end{corollary}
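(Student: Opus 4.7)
My plan is to argue directly from the defining property of an LBCP, which states that any point within it can counteract an arbitrary vertical load without destabilizing the structure beneath. The key observation is that when $\mathcal{I}_{new}$ is placed, the contact forces it transmits to the items supporting it act at points that lie inside the intersections $\mathcal{I}_{new}\cap\mathbf{P}^\triangle_i$ used to build its support polygon via Eq.~\ref{eq:LBCP}. Since each such intersection is a subset of the existing $\mathbf{P}^\triangle_i$, the load $\mathcal{I}_{new}$ imposes on every supporting item is, by construction, applied through an LBCP region.

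First I would fix notation for the supporting items, namely the set $\{\mathcal{I}_i\in\mathcal{B}_t \mid z_{new}=h^s_i,\ \mathcal{I}_{new}\cap\mathbf{P}^\triangle_i\ne\emptyset\}$, and decompose the weight of $\mathcal{I}_{new}$ (under the bounded CoG uncertainty from Eq.~\ref{eq:CoG set}) into the resultant vertical forces transferred onto each such item. Corollary~\ref{corollary:1} guarantees that $\mathcal{I}_{new}$ itself does not tip, so these transmitted forces are purely downward and are supported entirely by points in the LBCP regions $\mathbf{P}^\triangle_i$. Then I would invoke the LBCP definition verbatim: because $\mathbf{P}^\triangle_i$ can bear any gravitational force at any of its points, the additional load injected by $\mathcal{I}_{new}$ does not disturb the equilibrium of $\mathcal{I}_i$, nor of anything further down the support chain that $\mathbf{P}^\triangle_i$ already certifies as load-bearable.

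The main obstacle I anticipate is making precise what ``stability remains unaffected'' means, since stability is inherently a recursive property: $\mathcal{I}_i$'s equilibrium depends on the items supporting it, which in turn depend on theirs, and so on down to the bin floor. I would address this by emphasizing that the LBCP abstraction has already absorbed this recursion into its definition: a region is declared an LBCP only if every item in the support chain below it can tolerate arbitrary vertical loads transmitted through that region. Consequently, a single appeal to the LBCP property of each $\mathbf{P}^\triangle_i$ settles the stability of the entire subtree beneath $\mathcal{I}_{new}$, collapsing what would otherwise be an induction on the stacking depth into one line of reasoning.
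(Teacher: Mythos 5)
Your proposal is correct and follows essentially the same route as the paper: both reduce the claim to the defining property of LBCPs, namely that the reaction forces from $\mathcal{I}_{new}$ are applied at points lying inside the supporting items' LBCPs, which by definition can absorb arbitrary vertical loads, so the support chain below is undisturbed. The only cosmetic difference is that the paper additionally spells out a convex-combination step — arguing that the resultant of each supporting item's own weight and the new reaction force can still be produced by the LBCP set due to convexity — whereas you fold that recursion directly into the LBCP definition; both are at the same level of rigor.
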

\begin{proof}
Suppose $\{\mathcal{I}_1,...,\mathcal{I}_i,...\mathcal{I}_g\}$ that support $\mathcal{I}_{new}$. The loading forces from $\mathcal{I}_{new}$ to $\{\mathcal{I}_1,...,\mathcal{I}_i,...,\mathcal{I}_g\}$ are $\{\mathbf{r}_1,...,\mathbf{r}_i,...,\mathbf{r}_g\}$, which have opposite directions but the same magnitudes as the supporting forces originated from $\{\mathbf{P}_1^\triangle,...,\mathbf{P}_i^\triangle,...,\mathbf{P}_g^\triangle\}$. Due to the convexity of LBCP, the resultant, or the convex combination of the gravitational force of $\mathcal{I}_i$ and the reaction force $\mathbf{r}_i$ can be produced by $\mathcal{P}_t$. Thus, the stability of items that support $\mathcal{I}_{new}$ remains unaffected.
\end{proof}

Corollary~\ref{corollary:2} suggests that validation is required solely for the stability of the new item, with no need for items already packed. Consequently, this facilitates the integration of the proposed stability validation method with policy learning process due to its computational efficiency.\\

\begin{algorithm}[H]
\caption{Structural Stability Validation}
\label{alg:validate}
\begin{algorithmic}[1] 

\STATE \textbf{Input:} 
\STATE \hspace{0.5cm} New object: $\mathcal{O}_{new}$, Load configuration: $\mathcal{L}_{new}$
\STATE \hspace{0.5cm} Height Map: $\mathcal{HM}_t$, Feasibility Map: $\mathcal{FM}_t$, Tolerance: $\delta_{CoG}$
\STATE \textbf{Output:} Boolean stability flag $valid$

\STATE \textbf{Function} Validate($\mathcal{O}_{new}$, $\mathcal{L}_{new}$, $\mathcal{HM}_t$, $\mathcal{FM}_t$, $\delta_{CoG}$)

\STATE \hspace{0.5cm} Extract object placement coordinates $(x_i, y_i, w_i, d_i)$ from $\mathcal{O}_{new}$

\STATE \hspace{0.5cm} Compute \textbf{support height}:
\STATE \hspace{1.0cm} $h^s \gets \min \mathcal{HM}_t(x_i:x_i+w_i, y_i:y_i+d_i)$

\STATE \hspace{0.5cm} Compute \textbf{contact points at $h^s$}:
\STATE \hspace{1.0cm} $PS_{contact} \gets \{(x, y) \mid \mathcal{HM}_t(x, y) = h^s, x \in [x_i, x_i+w_i], y \in [y_i, y_i+d_i] \}$

\STATE \hspace{0.5cm} Obtain \textbf{points belonging to $\mathcal{P}_t$}:
\STATE \hspace{1.0cm} $PS_{feasible} \gets \{(x, y) \mid \mathcal{FM}_t(x, y) = \text{true}, x \in [x_i, x_i+w_i], y \in [y_i, y_i+d_i] \}$

\STATE \hspace{0.5cm} Compute \textbf{support polygon}:
\STATE \hspace{1.0cm} $\mathbf{P}_{new}^\triangle \gets \mathbf{CH}(PS_{contact} \cap PS_{feasible})$ \quad \textit{(Convex Hull of intersection)}

\STATE \hspace{0.5cm} Compute \textbf{CoG set}:
\STATE \hspace{1.0cm} $\mathcal{C}_{new} \gets$ Compute based on Eq.~\ref{eq:CoG set} using $\delta_{CoG}$

\STATE \hspace{0.5cm} \textbf{Check if CoG is within the support polygon:}
\STATE \hspace{1.0cm} \textbf{if} $\mathcal{C}_{new} \subseteq \mathbf{P}_{new}^\triangle$ \textbf{then}
\STATE \hspace{1.5cm} $valid \gets \text{True}$
\STATE \hspace{1.0cm} \textbf{else}
\STATE \hspace{1.5cm} $valid \gets \text{False}$
\STATE \hspace{1.0cm} \textbf{end if}

\STATE \hspace{0.5cm} \textbf{return} $valid, \mathbf{P}_{new}^\triangle, h^s$

\end{algorithmic}
\end{algorithm}
\subsubsection{Structural Stability Validation}
\label{section:ssv}
To validate if the new item is stably packed in the loading position $\mathcal{L}_{new}$, according to Corollary~\ref{corollary:1}, we need to calculate the support polygon using Eq.~\ref{eq:LBCP}. In addition, due to the uncertainty of the CoG of the item, all extreme points of $\mathcal{C}_{new}$ should be validated.

Note that the time cost increases {\it w.r.t.} the number of LBCPs in Eq.~\ref{eq:LBCP}. To enhance computational efficiency, we create a map called the feasibility map $\mathcal{FM}$, aligning the LBCPs with the heightmap recorded by the camera mounted on the ceiling. 
This approach allows easy identification of the contact point that belongs to $\mathcal{P}_t$ by projecting all points in the contact area into the feasibility map. 
Let $\mathcal{HM}_t(x_i, y_i)$ denote the current heightmap, the distance field from the camera to the bin located at $(x_i, y_i)$. $\mathcal{HM}_t(x_i:x_i+w_i, y_i:y_i+d_i)$ denotes the sliced window. Then, $\mathcal{P}_t$ are projected onto the $xy$-plane creating $\mathcal{FM}$. $\mathcal{FM}_t(x_i, y_i)$ indicates if $(x_i, y_i)$ belongs to an LBCP or not. As every LBCP is mapped to $\mathcal{FM}_t$, verifying a single contact point involves performing only one Boolean operation. Utilizing $\mathcal{HM}_t$ and $\mathcal{FM}_t$, the support polygon $\mathbf{P}_{new}^\triangle$ can be calculated in almost constant time, with minor fluctuations arising from the item's dimensions. The stability validation procedure is detailed in Alg.~\ref{alg:validate}.

\subsubsection{Stability Ensured Packing Policy}
\label{section: stability ensured drl}
We integrate our stability validation method into GOPT \cite{DRL_GOPT}, a state-of-the-art DRL model to address the stability-ensured OBPP. GOPT is an actor-critic DRL model that incorporates two key components: the Placement Generator (PG), which generates a set of Empty Maximal Spaces (EMS) \cite{HEU_EMS} to define feasible placements, and the Packing Transformer, which assembles self-attention and cross attention modules~\cite{Attention}, to serve as the backbone of the DRL framework, enabling the capture of spatial interactions between EMSs and items.

To guarantee stable packing, we introduce the Structural Stability Validation (SSV) module that implements Alg.~\ref{alg:validate}, and the Structural Stability Update (SSU) module that assembles Alg.~\ref{alg:update}. Firstly, a set of placement candidates is extracted with the placement generator. All candidates are validated by the SSV module that relies on the current feasibility map $\mathcal{FM}_t$ to assign a Boolean value to each candidate indicating stability. Meanwhile, all candidates along with the dimensions of the new item are fed into the DRL model. Then, the distribution of actions for packing is derived through the softmax function, which relies on the element-wise product of the GOPT output and the stability mask. After that, the packing action is sampled and executed. Finally, a new LBCP will be appended to $\mathcal{P}_{t+1}$, and SSU module updates $\mathcal{FM}_t$ to $\mathcal{FM}_{t+1}$ using Alg.~\ref{alg:update}.

\begin{algorithm}[H]
\caption{Structural Stability Update}
\label{alg:update}
\begin{algorithmic}[1] 

\STATE \textbf{Input:} 
\STATE \hspace{0.5cm}Feasibility Map: $\mathcal{FM}_t$, Updated LBCPs: $\mathcal{P}_{t+1}$

\STATE \textbf{Output:}$\mathcal{FM}_{t+1}$
\STATE \textbf{Function} UpdateFeasibilityMap($\mathcal{FM}_t$, $\mathcal{P}_{t+1}$)
\STATE \hspace{0.5cm} $\mathcal{FM}_{t+1}\gets{\mathcal{FM}_t}$
\STATE \hspace{0.5cm} Get $\mathbf{P}_{new}^\triangle$ from $\mathcal{P}_{t+1}$
\STATE \hspace{0.5cm} \textbf{for each} $(x,y) \in \mathbf{P}_{new}^\triangle$ \textbf{do}
\STATE \hspace{1.0cm} $\mathcal{FM}_{t+1}(x,y) \gets \text{true}$ \quad \textit{(Mark as feasible)}
\STATE \hspace{0.5cm} \textbf{end for}
\STATE \hspace{0.5cm} \textbf{return} $\mathcal{FM}_{t+1}$

\end{algorithmic}
\end{algorithm}

\subsection{Safe Rearrangement Planning}
\label{section:srp}
This section presents the Stable Rearrangement Planning (SRP) method for re-configuring loading position of previously packed items. The objective is to identify the shortest series of rearrangement actions necessary to make space for a new item or fulfill the bin utilization criteria, all while maintaining the stability of the bin. This task involves three types of operations: Unpacking, Packing, and Repacking.
\begin{itemize}
    \item \textbf{Unpacking} involves removing an item from the bin to the staging area.
    \item \textbf{Packing} refers to loading a new item or an item from staging area into the bin. 
    \item \textbf{Repacking} relocates an item from one position to another inside the bin. 

\end{itemize}
Given the bin state $\mathcal{B}_t$, the LBCPs $\mathcal{P}_t$, the dimensions of the new item $\mathcal{O}_{new}$, and the target bin utilization rate $T_{uti}$, the agent must strategically utilize the three afore-described operations to achieve one of the following objectives:
\begin{itemize}
    \item Pack the new item into the bin, maximizing bin utilization.
    \item Replace a set of items inside the bin with the new item to satisfy a prespecified $T_{uti}$.
\end{itemize}



Specifically, SRP is divided into two primary phases: Search and Path Refinement. Firstly, we use Monte Carlo Tree Search (MCTS)~\cite{MCTS} to find a feasible sequence of rearrangement operations. Then, we minimize the feasible sequence using $A^*$~\cite{A_star}. 

\subsubsection{Monte Carlo Tree Search}
\begin{figure}
    \centering
    \includegraphics[width=0.35\textwidth]{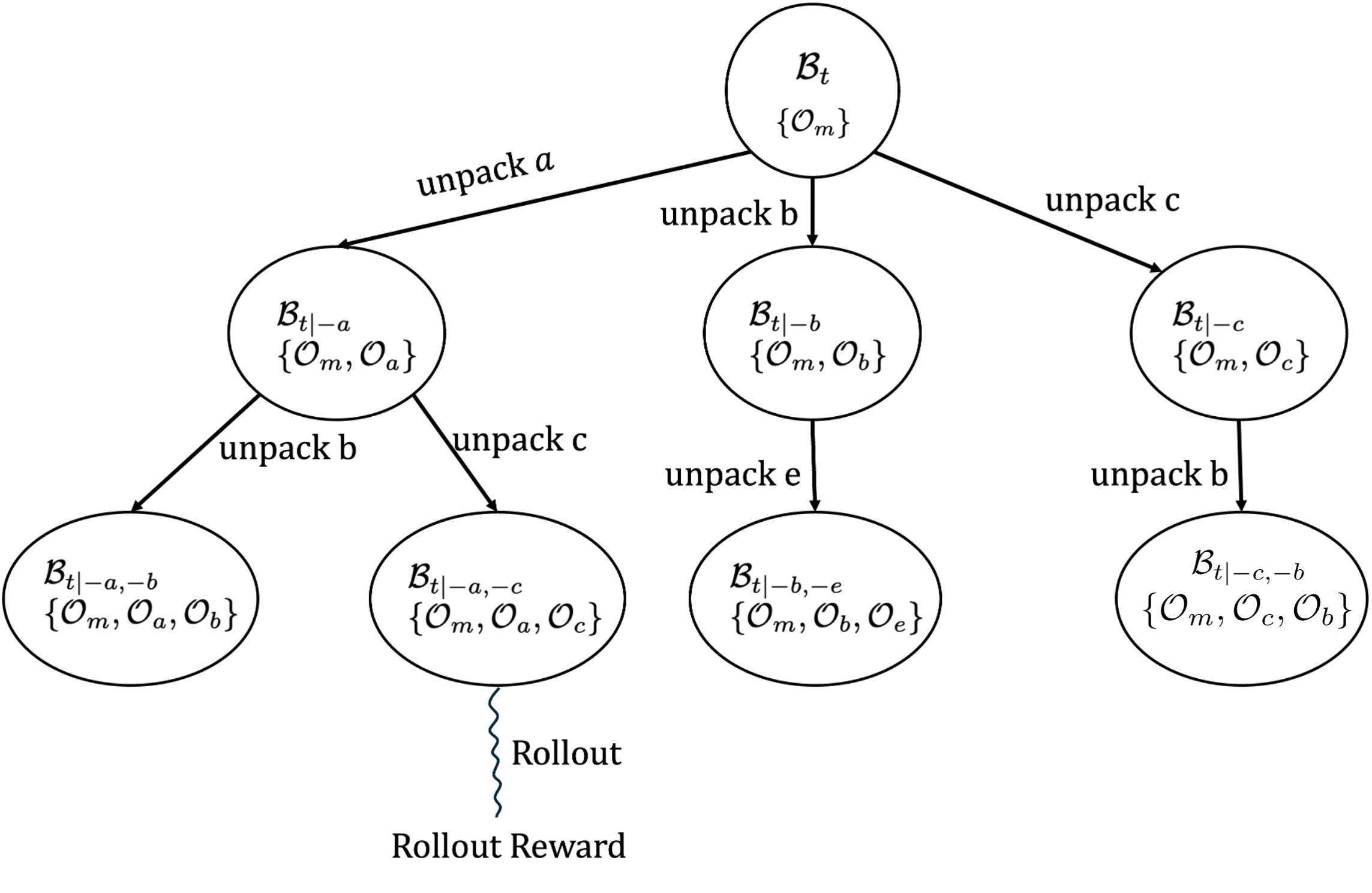}
    \caption{Search tree $\mathcal{T}$: An edge represents an unpacking operation removing an item from the current bin and adding it to the set of unpacked items.}
    \label{fig:mcts}
\end{figure}

MCTS identifies the items that need to be rearranged and determine their new loading positions. The search process is represented by a search tree $\mathcal{T} = (\mathcal{V}, \mathcal{E})$, where $\mathcal{V}$ is the set of nodes and $\mathcal{E}$ is the set of directed edges. 
Fig.~\ref{fig:mcts} illustrates the structure of the tree, where each edge corresponds to a specific operation. To improve search efficiency and prevent repetitive state transitions, we restrict transitions between states to the \textbf{unpacking} action. By structuring the tree in this way, we significantly reduce its complexity compared to a naive approach that evaluates all possible packing, unpacking, and repacking actions at each decision step. As a result, the state $s_i$ of a tree node is characterized by unpacked $\{\mathcal{O}_1, \dots, \mathcal{O}_n\}$ and $\mathcal{B}_{t | -1, \dots, -n}$, {\it i.e.}, $\mathcal{B}_t$ without items $\{\mathcal{O}_1, \dots, \mathcal{O}_n\}$. Given the unpacked items along with newly arrived items, the packing order and their loading positions are determined through a rollout simulation. A feasible rearrangement sequence that meets the specified criteria is deemed a successful outcome if it is identified within the constraint imposed by the maximum allowable number of added nodes. Otherwise, the search process is considered to have failed. Upon a search result, the unpacked items are identified on the basis of the state transitions recorded during the process, while the rollout simulation returns the optimal packing order with associated new loading positions.


Specifically, the search tree $\mathcal{T}$ expands the nodes incrementally based on four steps: selection, expansion, rollout, and backpropagation. In the selection phase, each child node is evaluated using a UCB1 (Upper Confidence Bound) function~\cite{UCB1}, shown in Eq.~\ref{eq:UCB1}, which compromises the exploitation of the learned policy and the exploration for tree expansion.
\begin{equation}
    UCB1(s_i) = \bar{v}_i + \eta\sqrt{\frac{\ln{N}}{n_i}
    }
\label{eq:UCB1}
\end{equation}
where $\bar{v}_i$ is the average rollout reward at state $s_i$, $N$ is the number of visits of its parent, $n_i$ is the number of visits of current node, and $\eta$ is the weight associated with the exploration term set to $\eta=1$.

During the expansion phase, the number of possible unpacking operations corresponds to the number of directly unpackable items. Given a selected node with the bin state $\mathcal{B}_{t|-n, \dots, -m}$, we randomly select one unpackable item $\mathcal{I}_k$, where $k \notin \{n, \dots, m\}$. This item is then added to the set of unpacked items, updating it to $\{ \mathcal{O}_n, \dots, \mathcal{O}_m, \mathcal{O}_k\}$. Consequently, the bin state transitions to $\mathcal{B}_{t|-n, \dots, -m, -k}$.

Since unpacking an item affects the structural stability of the bin, the corresponding LBCP, denoted as $(\mathbf{P}^\triangle_k, h_k^s)$, is removed from the set of LBCPs $\mathcal{P}_{t|-n, \dots, -m}$. As a result, the updated LBCP set becomes $\mathcal{P}_{t|-n, \dots, -m, -k}$.
\begin{equation}
    R_{rollout} = w_v*{Critic(\mathcal{B}_t, \mathcal{O}_{last})} + U_{t}
\label{eq:rollout reward}
\end{equation}

The rollout phase aims to determine the optimal packing sequence for the unpacked items and a new item using the trained DRL model iteratively. In each iteration, packable items are selected via Alg.~\ref{alg:validate}, where $\mathcal{O}_i$ is considered packable if at least one valid loading position exists. The critic network ranks the packable items, and the actor predicts the best placement. The critic approximates the expected return of each state, helping prioritize high-value packing orders and improving value backpropagation through the decision tree.

The rollout reward, defined in Eq.~\ref{eq:rollout reward}, is computed as a weighted sum of two terms: the predicted value of the terminal state, denoted as $Critic(\mathcal{B}_t, \mathcal{O}_{last})$, and the final bin utilization after the rollout simulation, denoted as $U_t$. The weight $w_v$ controls the trade-off between long-term decision quality—reflected by the critic’s prediction—and immediate packing efficiency. The critic's value captures long-term efficiency, while bin utilization reflects immediate packing performance. Together, they guide the agent to optimize both short-term and long-term objectives, aligning with the rearrangement goal.

\subsubsection{$A^*$ for Sequence Refinement}
\begin{figure}
    \centering
    \includegraphics[width=0.3\textwidth]{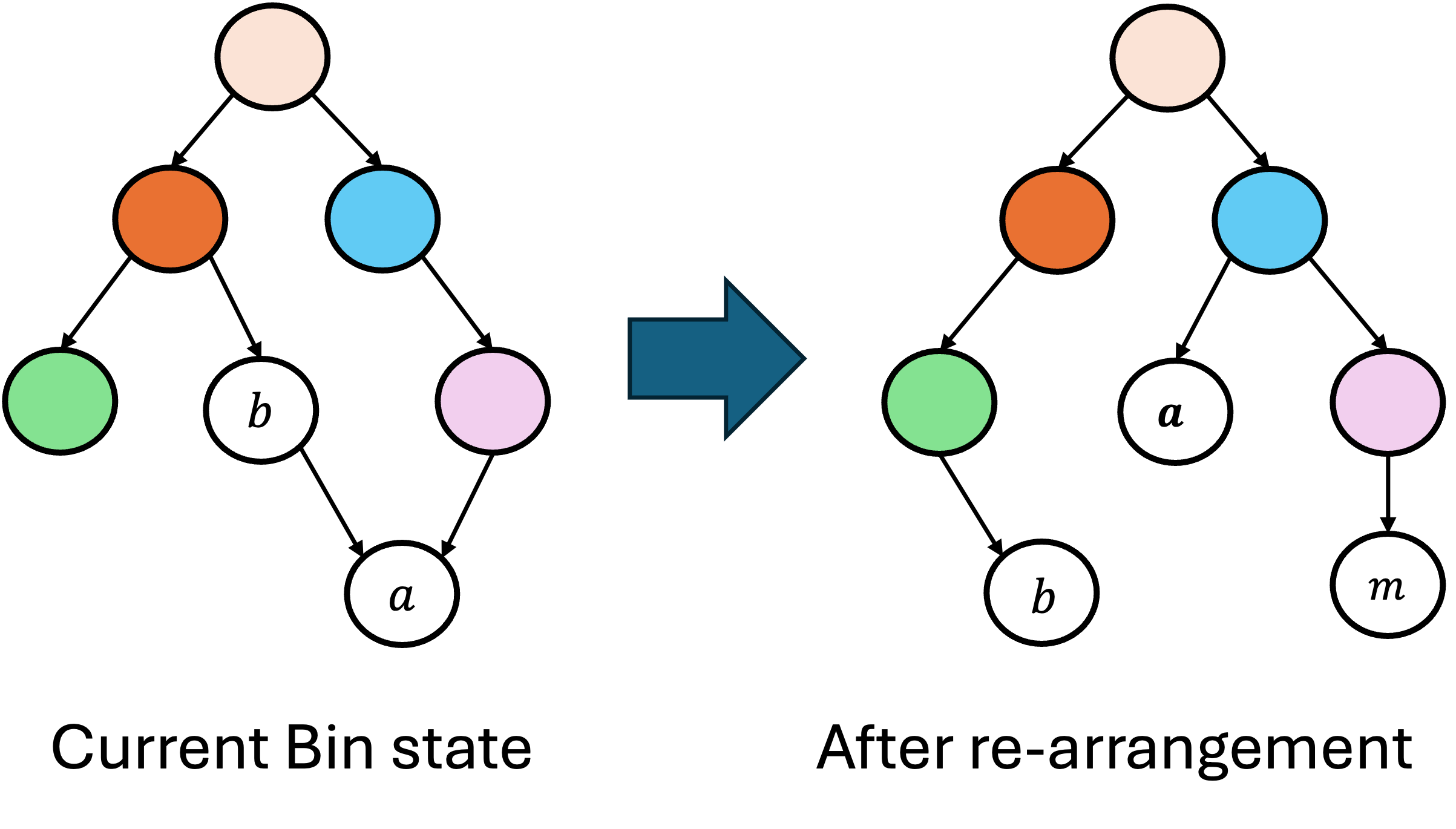}
    \caption{Precedence graphs before and after re-arrangement. Note that $m$ is the new item.}
    \label{fig:precedence}
\end{figure}
\begin{figure}
    \centering
    \includegraphics[width=0.35\textwidth]{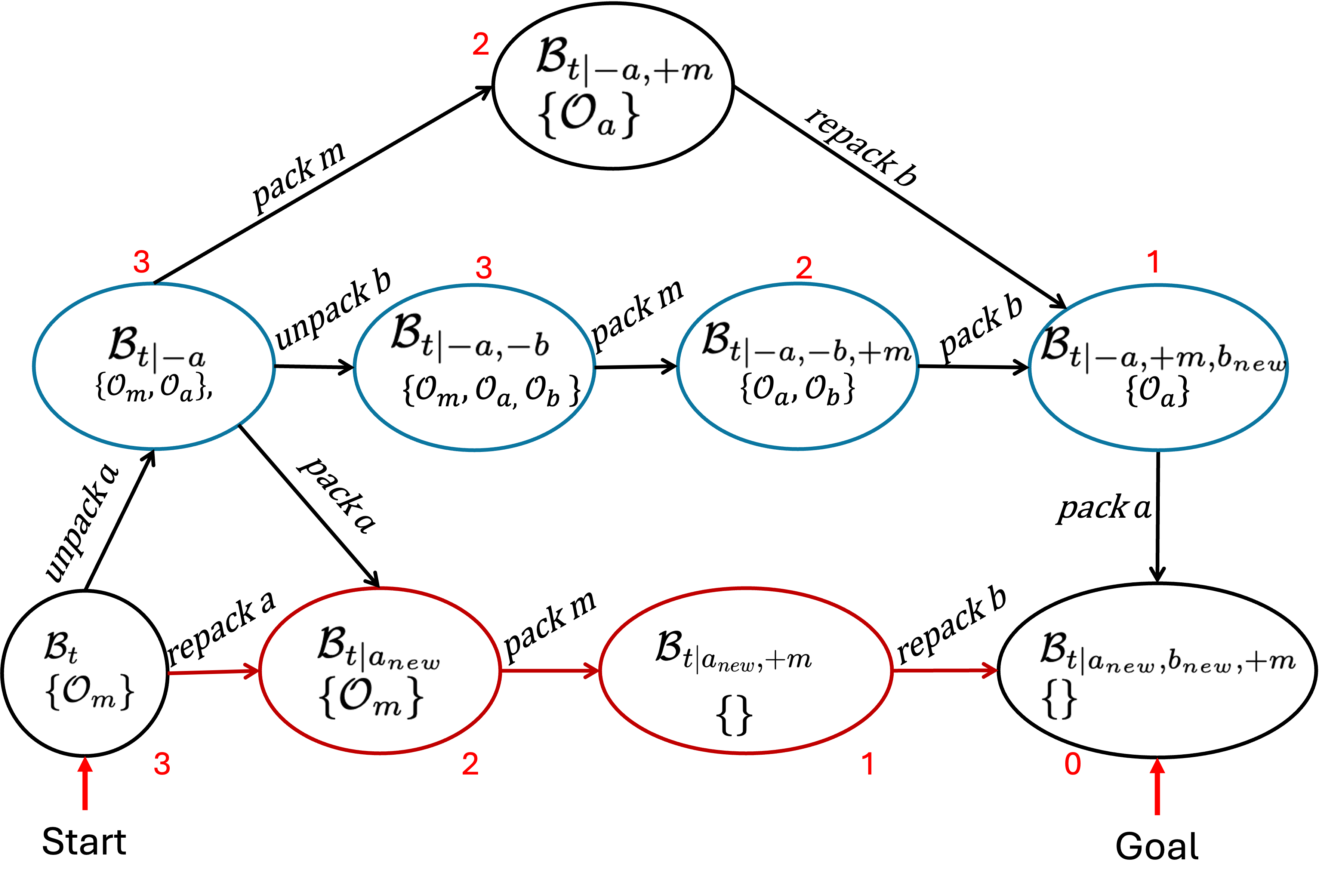}
    \caption{Use of $A^*$ for finding the shortest sequence length of operations. Nodes represent the bin state and the set of unpacked items, while edges represent the operations. The numbers in red are heuristic costs.}
    \label{fig:A*}
\end{figure}

After the rearrangement of sequence via MCTS, the stacking relationships among items may change. We use the precedence graph \cite{DRL_MP_Precedence_xu2023neural}, which encodes the constraints of the pick-up orders, to represent the stacking relationship. Fig.~\ref{fig:precedence} demonstrates an example where the item $b$ cannot be picked up due to the movement block (MB) of item $a$. 

Given the precedence graph before and after the re-arrangement, our goal is to minimize the operations leading the precedence graph before the re-arrangement to the precedence graph after the re-arrangement. This transformation process is modeled as a graph-editing problem, where nodes represent bin state and the set of unpacked items, and edges capture the relationships between them. The transition between nodes are done through packing (adding node), unpacking (deleting node), repacking (rewiring node) operations. Fig.~\ref{fig:A*} shows one example. In accordance with the representation of the bin in MCTS, we denote by $\mathcal{B}_{t|-n, \dots, -m, +k}$ the instance where the new $\mathcal{I}_k$, previously not loaded into the bin, is now packed into it. We denote by $\mathcal{B}_{t|\dots, -m, n_{new}}$ the instance where $\mathcal{I}_n$, previously unpacked from the bin, is now loaded into a new position inside the bin.

To determine the best sequence of operations, the $A^*$ search algorithm methodically examines the realm of potential changes. The search is directed by a heuristic function, characterized by the number of items requiring modification to achieve the target state. The total cost of each node is the summation of the number of operations achieving to the current node and the cost predicted by the heuristic function. Notably, all search nodes are generated dynamically on-the-fly rather than being precomputed. This design ensures memory efficiency and allows adaptive exploration based on current packing states and precedence constraints.

\section{Experiment}
We conducted the experiment validating the efficiency of the proposed stability validation method. The performance of the trained DRL model is then demonstrated in the \textbf{RS Dataset}\cite{DRL_Dataset_zhao2021online}. Subsequently, we exhibit how the proposed SRP method improves bin utilization and its effectiveness when compared to the baseline. Finally, we show a real-world testing case study.

\subsection{Structural Stability}
\begin{figure*}
    \centering
    \includegraphics[width=0.65\textwidth]{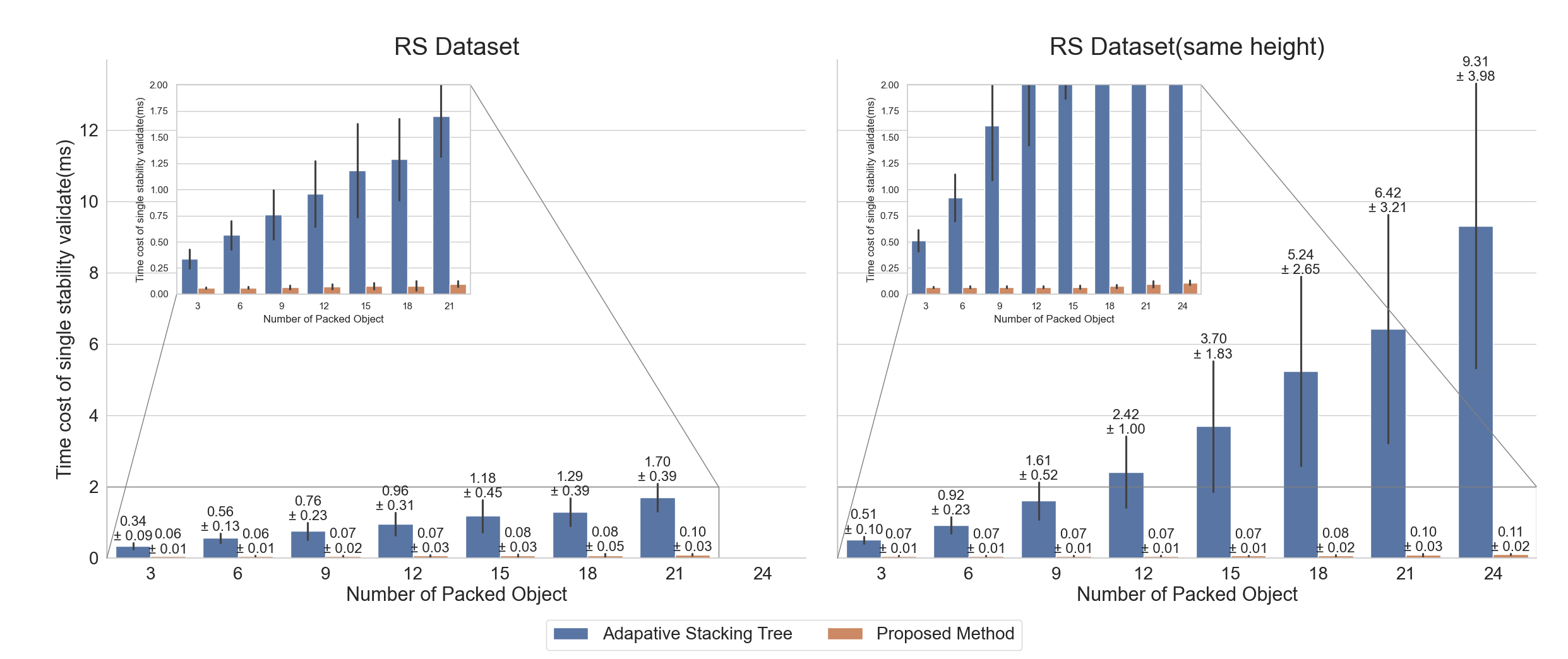}
    \caption{Time cost for completing the validation of all pre-specified loading positions for the new item. Each bar is annotated with the mean and standard deviation in milliseconds.}
    \label{fig:single packing time cost}
\end{figure*}
The baseline for validating the stability is the Adaptive Stacking Tree \cite{Stability_zhao2022learning}, since it attains the state-of-the-art accuracy while improving computational efficiency. Specifically, it is a data structure designed to efficiently update the mass distribution, and hence it assumes that the mass of the item is known in advance. The tree is updated in a top-down fashion, which means that stability updates propagate downward in an efficient manner. 

Note that any loading operation verified as stable by our method can ensure the bin structural stability, as evidenced by a random stable loading experiment on a real platform\footnote{\url{https://drive.google.com/file/d/1N7vhnvhIHXhAKGoktYj1EzQf3IT7t46c/view?usp=share_link}}. Therefore, we only assess the efficiency of the proposed approach. The data set is derived from the \textbf{RS dataset} along with a modified version referred to as \textbf{RS dataset (same height)}. In the \textbf{RS dataset}, each sequence of items is selected without replacement from a collection of 64 items with varying sizes. In contrast, in the \textbf{RS dataset (same height)}, the sequences are sampled without replacement from a subset of the \textbf{RS dataset}, where all items have the same height of $0.12$ meters.

We generated $1,000$ sequences of items and each sequence has length $500$, and we adopt a random packing policy that randomly selects a stable loading position from a set of pre-defined positions. Given a sequence of items, we firstly validate the set of loading positions for the upcoming item. If no stable loading positions are available, this item will be simply ignored and validate the stable loading position for the next item. If stable loading positions are available, a random policy simply selects one loading position for packing the item. This process terminates when the whole sequence is traversed. When packing each sequence of items, we record the time cost in every three packing iterations shown in Fig.~\ref{fig:single packing time cost}. 

In general, the proposed method is at least $5$ times faster than the baseline method. With the increase of the number of packed items, the time cost becomes higher for the baseline method, while the time cost of our proposed method remains nearly constant. In addition, the efficiencies of the baseline method for dealing with different dataset are significantly different. The time cost becomes even higher in the \textbf{RS dataset (same height)}. The reason is that the packed item potentially has much more parents so that it requires much more update on the edge of the adaptive stacking tree. On the other hand, our method is not affected by the variation of the dataset.

\subsection{DRL Model Performance}
We train and evaluate the GOPT model on the \textbf{RS dataset} using the Tianshou framework~\cite{Tianshou}, adopting the same hyperparameter settings as in~\cite{DRL_GOPT}. Unlike~\cite{DRL_GOPT}, our training phase incorporates the static stability module (Section~\ref{section:ssv}) to filter out unstable loading position candidates, ensuring that only stable placements are used to train the network.

During the testing phase, we assess not only the performance of the actor network but also that of the critic network. This is motivated by the fact that the critic network plays a crucial role in guiding the search tree expansion within the SRP module (Section~\ref{section:srp}). To evaluate the critic network, we assume access to multiple lookahead items and utilize the critic to rank them. Items with higher predicted critic values are prioritized for placement, under the hypothesis that doing so results in higher overall bin utilization.

We generated $2,000$ sequences of items for testing. Fig.~\ref{fig:DRL performance} reports the model performance {\it w.r.t.} different number of items of the lookahead. Fig.~\ref{fig:DRL performance} shows that the bin utilization increases monotonously, with increasing number of item forecast. The observation also gives strong evidence that justifies the use of the critic network to find the feasible sequence in MCTS. 
\begin{figure}
    \centering
    \includegraphics[width=0.35\textwidth]{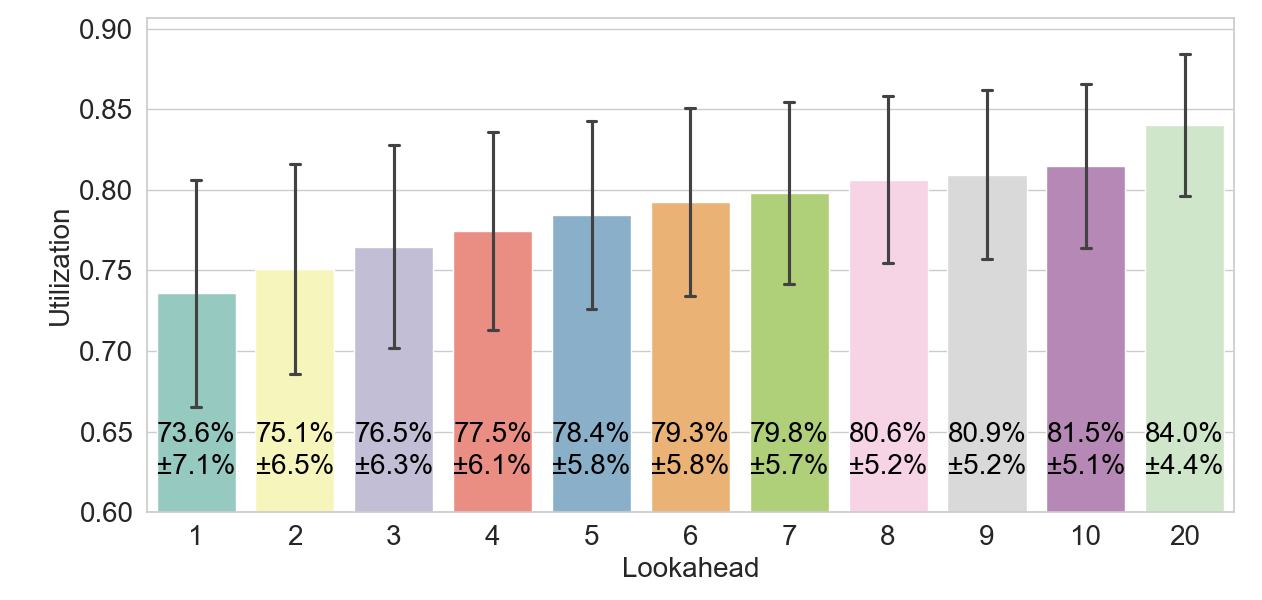}
    \caption{Performance of the trained DRL model on the \textbf{RS} test set. The horizontal axis represents the number of lookahead items, and the vertical axis represents the bin utilization.}
    \label{fig:DRL performance}
\end{figure}
\subsection{Stable Rearrangement Planning}
\begin{figure}
    \centering
    \includegraphics[width=0.35\textwidth]{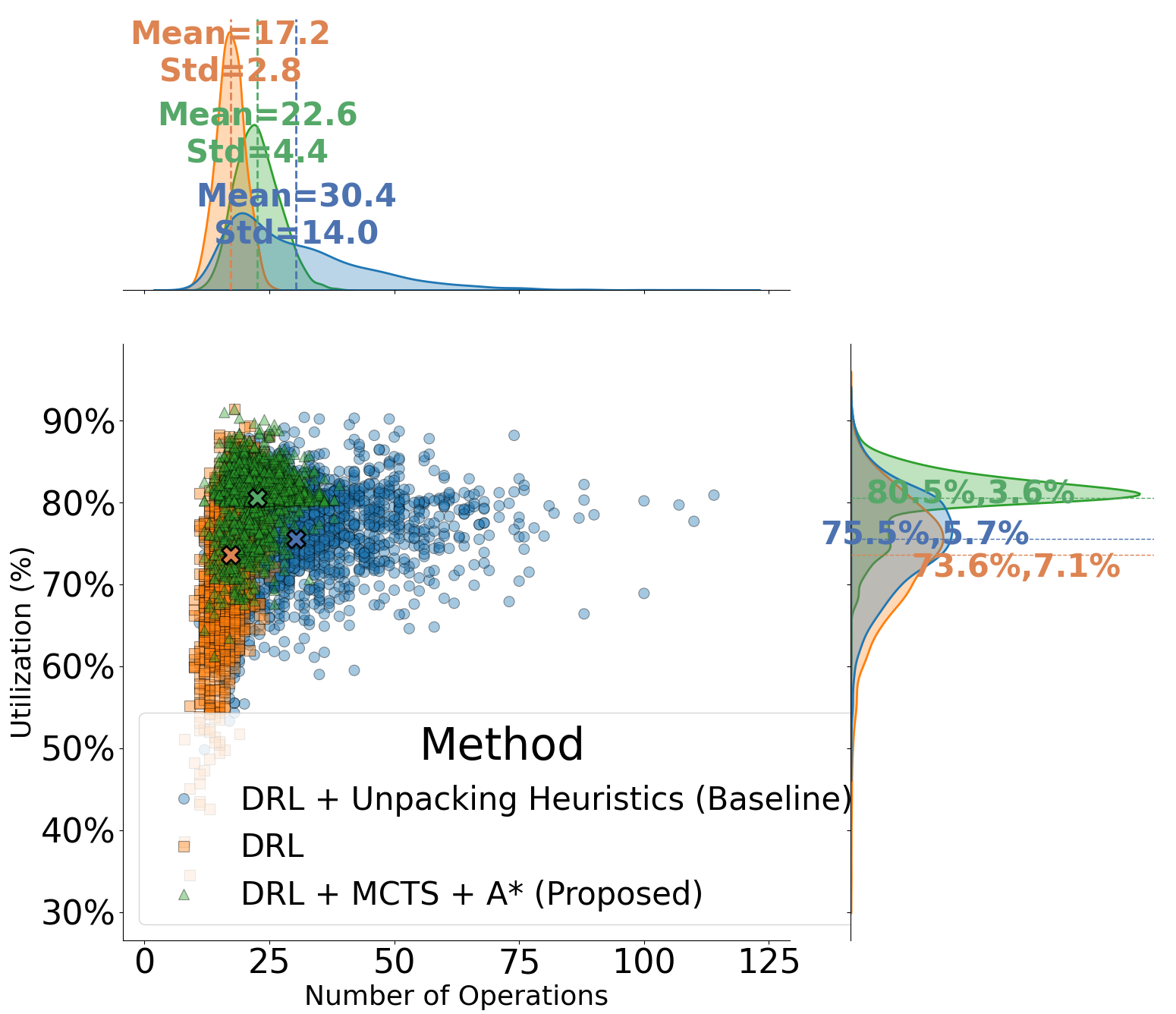}
    \caption{Joint distribution of the number of operations and bin utilization.}
    \label{fig:re-arrangment result}
\end{figure}
In this experiment, we aim to verify the following.
\begin{itemize}
    \item The contribution on improving the bin utilization.
    \item The efficiency of the proposed method compared with unpacking heuristics~\cite{DRL_MP_yang2023heuristics}.
    \item The success rate of the MCTS.
    \item The contribution of the use of $A^*$ for minimizing the number of operations.
\end{itemize}
We utilize the same test set to evaluate the improvement in bin utilization. For each sequence of items, the rearrangement is triggered if the new item cannot be packed into the bin. In the rearrangement phase, we run the MCTS algorithm to find the feasible sequence, and $A^*$ to shorten the sequence. In each MCTS run, the maximum number of nodes is set to $100$. The weight $w_v$we choose for calculating rollout reward is $5$. To control the branching factor, each node can have $3$ child nodes at maximum. To compromise the time cost for finding the solution and execution of the operation, we set the maximum search depth at $6$, which means that the maximum number of unpacked items is $6$. The search is deemed fail if no feasible sequence was found after expanding the tree to have $100$ nodes. We set the $T_{uti}$ to $0.8$. If the bin utilization is above the threshold, the search terminates even though some items remains not to be packed.

For comparison purposes, we implement the baseline~\cite{DRL_MP_yang2023heuristics} that employs a heuristic to select unpacking item. Given a new item, the unpacking heuristic evaluates the items laid on the top layer of the bin via a linear function that compromises the increase of the bin utilization by exchanging this item with the new item, and the saved wasted space by unpacking the item. We use the same weights as~\cite{DRL_MP_yang2023heuristics}. To control the number of unpacking operations and for fair comparison, we set the maximum staging capacity to $6$. 

Fig.~\ref{fig:re-arrangment result} shows the joint distribution between the number of operations and the bin utilization. In Fig.~\ref{fig:re-arrangment result}, to show the improvement of the bin utilization, we also show the result of DRL model without re-arrangement. Based on the result, bin utilization was improved to $80.5\%$ using the proposed rearrangement method, while bin utilization was improved to $75.5\%$ from $73.6\%$ using the unpacking heuristics. Therefore, our method attains a better performance in terms of the bin utilization. On the other hand, both the proposed method and the baseline method increase the number of operations due to re-arrangement operations. Our method requires approximately $8$ fewer operations than the baseline approach.

In addition, the overall success rate of finding the feasible sequence using MCTS is $83.9\%$, and Fig.~\ref{fig:mcts success rate} shows the success rate {\it w.r.t.} different bin states which are categorized by the bin utilization. This figure reveals that with the increase of the bin utilization, the success rate of MCTS decreases gradually. It indicates that more searching iterations are needed with the increase of the bin utilization. 
\begin{figure}
    \centering
    \includegraphics[width=0.48\textwidth]{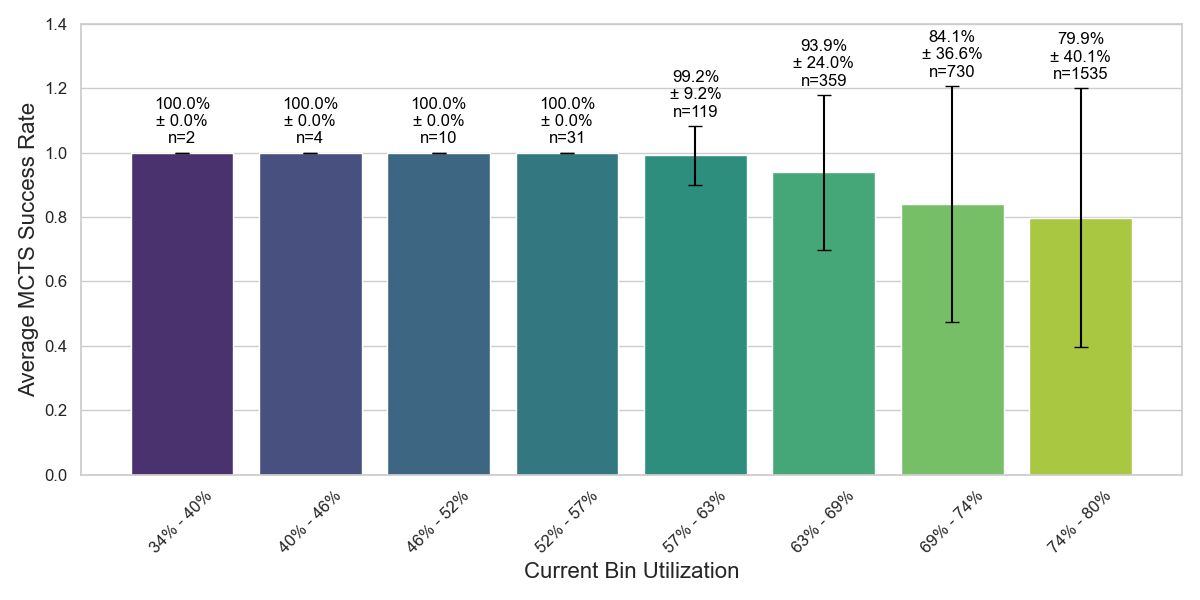}
    \caption{Success rate of the MCTS for finding the feasible sequence {\it w.r.t.} different bin states which are categorized by the bin utilization. Each bin is annotated with the mean success rate and standard deviation.}
    \label{fig:mcts success rate}
\end{figure}

To demonstrate the advantage of using $A^*$ in reducing the operation sequence length, we compare the sequence lengths identified by MCTS prior to and following the implementation of $A^*$. Our results indicate that $A^*$ shortens the sequence length by approximately $31\%$. Specifically, MCTS identifies sequences with an average length of $5.8$ and a standard deviation of $2.1$, whereas the sequences refined by $A^*$ have an average length of $4.0$ with a standard deviation of $1.6$.


\subsection{Real Experiment}
\begin{figure*}
    \centering
    \includegraphics[width=0.95\textwidth]{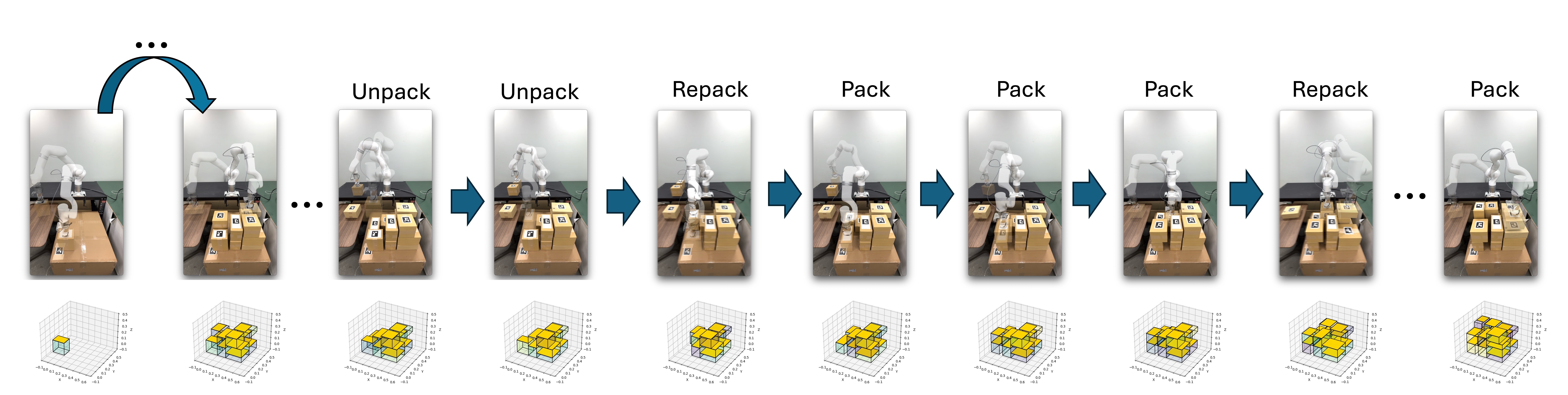}
    \caption{Real packing test with cardboard boxes of various sizes. The first two snapshots illustrate the sequential loading of new items, while the subsequent snapshots demonstrate our SRP approach creating space for the new item, with the corresponding LBCPs.}
    \label{fig:real_snapshots}
\end{figure*}

This section describes the deployment of our packing system on a physical platform. We used a 6-DOF robot arm with an RGB-D sensor mounted on its end-effector. The items are custom-made cardboard boxes in six distinct sizes. Each box is labeled with an Aruco marker~\cite{garrido2014automatic}, enabling the robot to estimate its pose and dimensions relative to the base frame.

The system observes one box at a time and places it into a pallet of size $(55\,\text{cm} \times 45\,\text{cm} \times 45\,\text{cm})$. When the pallet cannot accommodate a new item, the arm executes one of the following actions:  
1) remove a packed box and move it to a staging area (max 4 boxes),  
2) place a new or staged box into the pallet, or  
3) reposition a packed box without removing it.

In real-world scenarios, preventing collisions during packing is a critical concern. Without adjustments to the simulation, new items often collide with previously packed ones. Following~\cite{DRL_GOPT}, we add buffer space to each packed item by virtually enlarging its dimensions—while keeping the physical size unchanged. However, this may lead to unstable placements, as the contact area between the new item and enlarged LBCPs can be misestimated.

We evaluated the reliability of each loading position by checking the consistency of contact points under slight variations in item size. Specifically, a sliced window is extracted from the heightmap based on the item’s dimensions, then shrunk by a fixed offset to simulate a smaller item. If the maximum height points (i.e., contact points) within the original and shrunk windows differ in height, it indicates high sensitivity to size variation and potential instability. Such positions are discarded to enhance packing robustness. For stability validation, the support polygon is now computed using contact points from the shrunk window and $\mathcal{FM}_t$, mitigating errors caused by enlarged LBCPs.

Fig.~\ref{fig:real_snapshots} illustrates a robot-assisted packing scenario. The packing process terminates when our SRP cannot find the operation sequence to accommodate the new item. Finally, our packing system successfully packed $16$ items into the bin using $24$ operation steps, achieving the bin utilization of $71.2\%$. Taking into account the practical limitations of avoiding collisions and out-of-distribution item sizes, this outcome remains comparable to the result achieved in simulation ($80.5\%$).

\section{Conclusion}
This work proposed a novel framework for tackling the Online 3D Bin Packing Problem (OBPP), emphasizing both stability and real-time space utilization. We introduced the concept of Load-Bearable Convex Polygon (LBCP), a lightweight yet robust stability validation method that avoids reliance on exact mass distributions. This enables fast, stable decisions suitable for uncertain industrial environments.

By integrating this validation into a state-of-the-art DRL model, we ensured that only stable placements were considered during policy learning and execution. The approach achieved competitive bin utilization while satisfying safety constraints. Additionally, the Stable Rearrangement Planning (SRP) module enabled efficient reconfiguration of packed items without full unpacking, reducing operational steps.

Our experiments validated the computational efficiency and DRL compatibility of the stability module, as well as the SRP’s effectiveness in minimizing action sequences. The successful deployment on a real robotic system further demonstrates the method’s practical viability and scalability. Future work will explore more complex item geometries, dynamic task constraints, and multi-agent coordination for large-scale automation.


\bibliographystyle{IEEEtran}
\bibliography{biblio}

\end{document}